\documentclass[letterpaper, 10pt, conference]{ieeeconf}      

\IEEEoverridecommandlockouts                              
\overrideIEEEmargins

\usepackage[utf8]{inputenc}
\usepackage[T1]{fontenc}
\usepackage{graphicx}      
\usepackage{amsmath}
\usepackage{amssymb}
\usepackage{esint}
\usepackage{tikz}
\usepackage{multirow}
\usepackage{booktabs}
\usepackage{nicematrix} 
\usepackage{comment}
\usepackage{siunitx}
\makeatletter               
\let\NAT@parse\undefined    
\makeatother   
\usepackage{hyperref}   

\newtheorem{assumption}{Assumption}
\newtheorem{definition}{Definition}
\newtheorem{remark}{Remark}

\newtheorem{lemma}{Lemma}

\title{\LARGE \bf
Fast Collision Probability Estimation for Automated Driving using Multi-circular Shape Approximations}

\author{Leon Tolksdorf$^{1, 2}$, Christian Birkner$^{2}$, Arturo Tejada$^{1, 3}$, and Nathan van de Wouw$^{1}$ 
\thanks{$^{1}$Department of Dynamics and Control, Eindhoven University of Technology, Eindhoven, The Netherlands, e-mail:
        {\tt\small \{l.t.tolksdorf, a.tejada.ruiz, n.v.d.wouw\}@tue.nl}}%
\thanks{$^{2}$CARISSMA Institute of Safety in Future Mobility, Technische Hochschule Ingolstadt, Ingolstadt, Germany, e-mail:
        {\tt\small \{leon.tolksdorf, christian.birkner\}@thi.de}}%
\thanks{$^{3}$TNO, Integrated Vehicle Safety, Helmond, The Netherlands, e-mail:
        {\tt\small arturo.tejadaruiz@tno.nl}}%
}

\newcommand\copyrighttext{%
  \footnotesize \textcopyright 2024 IEEE.  Personal use of this material is permitted.  Permission from IEEE must be obtained for all other uses, in any current or future media, including reprinting/republishing this material for advertising or promotional purposes, creating new collective works, for resale or redistribution to servers or lists, or reuse of any copyrighted component of this work in other works.}
\newcommand\copyrightnotice{%
\begin{tikzpicture}[remember picture,overlay]
\node[anchor=south,yshift=10pt] at (current page.south) {\fbox{\parbox{\dimexpr\textwidth-\fboxsep-\fboxrule\relax}{\copyrighttext}}};
\end{tikzpicture}%
}

\begin{document}

\maketitle
\thispagestyle{empty}
\pagestyle{empty}
\copyrightnotice
\begin{abstract}

Many state-of-the-art methods for safety assessment and motion planning for automated driving require estimation of the probability of collision (POC). 
To estimate the POC, a shape approximation of the colliding actors and probability density functions of the associated uncertain kinematic variables are required. Even with such information available, the derivation of the POC is in general, i.e., for any shape and density, only possible with Monte Carlo sampling (MCS). Random sampling of the POC, however, is challenging as computational resources are limited in real-world applications. We present expressions for the POC in the presence of Gaussian uncertainties, based on multi-circular shape approximations. In addition, we show that the proposed approach is computationally more efficient than MCS. Lastly, we provide a method for upper and lower bounding the estimation error for the POC induced by the used shape approximations.

\end{abstract}
\begin{keywords}
autonomous vehicles, probability of collision, collision detection, risk assessment, collision avoidance, collision probability estimation
\end{keywords}
\section{Introduction}\label{sec_introduction}

In automated driving (AD), stochastic algorithms accounting for uncertainties in measurement and estimation, as well as the uncertainty associated with predicting future traffic scenarios are thriving as it is understood that deterministic algorithms cannot account for such uncertainty (see, e.g., \cite{schwarting2018planning,McAllister.82017}). Besides academia, safety standardization \cite{ISO21448} also recognizes the need to consider uncertainties due to, e.g., sensor and model limitations. \\
The handling of uncertainty in motion planning and decision-making modules of automated vehicles depends on the specific algorithms. Prominently, the probability of collision (POC) (see, e.g., \cite{schwarting2017safe, goulet2022probabilistic, schreier2016integrated, althoff2009model}) or risk (see, e.g., \cite{tolksdorf2023risk, hruschka2019uncertainty, nyberg2021risk}), as risk is in many cases defined on the basis of the POC, has to be estimated along the considered motion plan. Subsequently, a safe set can be constructed, composed of states for which a risk or POC constraint is satisfied (see, e.g., \cite{brudigam2021stochastic, muller2022motion, batkovic2022safe}). 
The commonality for all those motion planning approaches is that they must assess for each time step along a motion plan whether a certain POC is exceeded. In practice, this has to be done for thousands of trajectories during one planning cycle. Thus, computational efficiency becomes an important, practical requirement for the application of such algorithms. \\
For colliding objects of arbitrary shapes associated with probability density functions (PDFs), the Monte Carlo sampling (MCS) approach presents the general solution to compute the POC and has found wide adoption in literature \cite{lambert2008collision}. Computing the POC with MCS is, however, computationally expensive. Also, MCS is not guaranteed not to underestimate the POC for finite amount of samples, which is undesirable from a safety perspective. Thus, alternative approximations for more specific cases have been introduced. For example, \cite{du2011probabilistic} improves the MCS approach by formulating a joint object's (i.e., both vehicles) distribution for Gaussian uncertainties. The authors of \cite{philipp2019analytic} consider Gaussian distributed uncertainties for rectangular shapes but assume deterministic heading angles for both objects. The same assumption is also made by \cite{altendorfer2021new}, to derive the POC for a time interval. In \cite{patil2012estimating}, the vehicle's shape is assumed to be point-like and the POC is also estimated for a time interval. \\
Here, we introduce a computationally efficient alternative to approximate the POC based on multi-circular shape approximations. That is, by using  overlapping circles to cover a vehicle's shape as presented in \cite{ziegler2010fast}. This approximation has found widespread use in motion planning (see, e.g., \cite{gutjahr2016lateral, manzinger2020using, werling2012optimal}) as an efficient way to checking for collisions\footnote{It only requires verifying that the distance between two circle centers is less or equal than the sum of both radii.}. On the other hand, to the best of the author's knowledge, the estimation of the POC using the multi-circular shape approximation has not yet been reported in the literature.\\ 
This article presents a method to derive the POC for a multi-circular ego vehicle approximation colliding with a circular object approximation. In particular, we focus on Gaussian distributed positions of the object vehicle, whereas the ego vehicle is of known position and orientation. The main contributions of this paper are as follows: 1) We derive POC approximations in local, global, and polar coordinates for circle-to-circle collisions and compare such alternatives with respect to computational effort. We find that our proposed method in local coordinates is most efficient. 2) We present an algorithm computing the POC for generalized multi-circle approximations of the ego vehicle colliding with a circular object and prove that it retains a well-defined probability when given geometric conditions are satisfied. 3) We present a method to upper and lower bound the error introduced by the multi-circular approximation and demonstrate the evolving error in a representative automated driving scenario. \\
The remainder of this article is organized as follows. Section \ref{sec_problem} presents basic definitions and the assumptions on the basis of which the problem of approximating the POC with multiple circles is then formulated. We solve the basic case of circle-to-circle collisions in Section \ref{sec_circle_to_circle}, where we also compare various approaches. The presented results are then extended to the multi-circle-to-circle cases in Section \ref{sec_multi_circle}. Next, we present a method to bound the POC approximation error in Section \ref{sec_error} and provide a representative example in Section \ref{sec_example}. Lastly, we discuss the results, provide a conclusion and outline future work in the respective Sections \ref{sec_discussion} - \ref{sec_conclusion}.

\section{Problem Formulation}\label{sec_problem}
Consider an arbitrary traffic scene with an automated (or ego) vehicle and one dynamic object. Let each vehicle or object be characterized by a configuration $\boldsymbol{y} := (\boldsymbol{q}, \theta) \in \mathcal{C}$ where $\mathcal{C}$, the configuration space, $\boldsymbol{q} := (c_1, c_2) \in \mathbb{R}^2$ denotes the set of possible positions of the geometric center of a vehicle, and $\theta \in [0, 2\pi)$ the vehicle's heading angle. All variables associated with the ego vehicle and the object will be identified, respectively, with $e$ and $o$ subscripts. All random variables are defined over the same probability space $(\Omega, \mathcal{F}, \mathbb{P})$, where $\Omega$ is the sample space, $\mathcal{F}$ is a sigma-algebra over $\Omega$, and $\mathbb{P}$ is a probability measure over $\mathcal{F}$. The sets in $\mathcal{F}$, also known as events, are denoted by $\mathcal{A}_n$ with $n \in \mathbb{N}$. In the sequel, the following definitions will be needed.

\begin{definition}\label{def011}(Probability Measure)
A mapping $\mathbb{P}: \mathcal{F} \rightarrow [0, 1]$ is a probability measure if:
    \begin{itemize}
        \item[(i)] $\mathbb{P}\{\Omega\} = 1$ and $\mathbb{P}\{\emptyset\} = 0$,
        \item[(ii)] $\mathbb{P}\{\bigcup_{n =1}^{\infty} \mathcal{A}_n\} = \sum_{n =1}^{\infty}\mathbb{P}\{ \mathcal{A}_n\}$, if $\mathcal{A}_n$ are pairwise disjoint (i.e., $\mathcal{A}_n \cap \mathcal{A}_j = \emptyset$, $n\!\neq\!j$ and $\mathcal{A}_n,\mathcal{A}_j \in \mathcal{F}$).
    \end{itemize}
\end{definition}
\begin{definition}\label{def_random_vector}(Random Element)
    Let $(\Omega, \mathcal{F})$ and $(\tilde{\Omega}, \tilde{\mathcal{F}})$ be measurable  spaces. A map
    $\boldsymbol{x} : \Omega \rightarrow \tilde{\Omega}$ is called a random vector if $\boldsymbol{x}^{-1}[\tilde{\mathcal{A}}_n] \in \mathcal{F}$ for all $\tilde{\mathcal{A}}_n \in \tilde{\mathcal{F}}$.
\end{definition}

\subsection{Collision Probability for Arbitrary Shapes}
The ego vehicle must estimate the POC with other objects at all times along its motion plan. Thus, the ego vehicle should be equipped with an algorithm that predicts future (uncertain) configurations of these objects. We assume the following information to be available.
\begin{assumption}\label{ass1}(Information) 
The object configuration $\boldsymbol{y}_{o,k} \in \mathcal{C}$ at time $k$ is measured (either by direct measurement or estimation) together with an associated multivariate Gaussian probability density $p_{\boldsymbol{y}, k}$ which represents the uncertainty of the objects' three independent configuration variables. The configurations associated with the ego vehicle are assumed to be measurable at all times without uncertainty. 
\end{assumption}
In general, the uncertainties associated with the configurations of the ego vehicle are less significant than those of the object, justifying Assumption \ref{ass1}. Because the POC is estimated for all times using only the information available at time $k$, i.e., the object configuration and its associated PDF at time $k$, the sub-index $_k$ will be omitted in the sequel to simplify the  notation. \\
A collision occurs if the space occupied by the ego vehicle (i.e., the ego footprint) and that of the object intersect. Since the physical shapes of the ego and the object are bounded, it will be assumed that their footprints constitute compact regions over $\mathbb{R}^2$ denoted, respectively, by $\mathcal{S}_e(\boldsymbol{y}_{e})$ and $\mathcal{S}_o(\boldsymbol{y}_{o})$. Since by assumption $\boldsymbol{y}_{e}$ is deterministic, the POC is induced only by the random vector $\boldsymbol{y}_{o}$. More specifically, if $\tilde{\mathcal{A}}_{coll} := \{\boldsymbol{y} \in \mathcal{C} \mid \mathcal{S}_e(\boldsymbol{y}_{e}) \cap\mathcal{S}_o(\boldsymbol{y}) \neq \emptyset\}$ 
the POC is given by
\begin{equation}\label{general}
\textup{POC}\triangleq \mathbb{P}\{\boldsymbol{y}_{o} \in\tilde{\mathcal{A}}_{coll}\} = \int\displaylimits_{ \tilde{\mathcal{A}}_{coll}} p_{\boldsymbol{y}_o}(\boldsymbol{y}) \text{d} \boldsymbol{y}.
\end{equation}

\subsection{Multi-Circle Shape Approximations}
\begin{figure}[t!]
\begin{center}
\includegraphics[width=8.4cm]{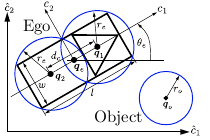}  
\setlength{\belowcaptionskip}{-10pt}
\caption{Depiction of the problem statement. The global coordinate axis are represented by $\hat{c}_{1},\hat{c}_{2}$, the vehicle coordinate system by $c_1, c_2$.}
\label{fig_problem_statement}
\end{center}
\end{figure}

We approximate the ego vehicle with multiple circles, as proposed in \cite{ziegler2010fast}, and the object with a single circle, allowing us to neglect the objects' heading angle, reducing the later derivations by one dimension. In the sequel, $\mathcal{B}[\boldsymbol{q};r]$ will denote a closed circle with center $\boldsymbol{q} = (c_{1}, c_{2})^T$ and radius $r \in \mathbb{R}_{>0}$. That is, 
\begin{equation}\label{shape}
\mathcal{B}[\boldsymbol{q};r] := \{ \boldsymbol{x} \in \mathbb{R}^2 \mid \| \boldsymbol{q} - \boldsymbol{x}\| \leq r \}.
\end{equation}
While the object is assumed to be a single circle with radius $r_o$, suppose the ego vehicle is of rectangular shape with length $l$ and width $w$, $l > w$. Further, suppose the ego should be covered by $N_c$ overlapping, closed circles with centers $\boldsymbol{q}_i$, $i=1,\dots,N_c$, and equal radii $r_e$. This gives the over-approximation: 
\begin{equation}\label{eq_shape}
\mathcal{S}_e(\boldsymbol{y}_e) \subset \bigcup_{i=1}^{N_c}\mathcal{B}[\boldsymbol{q}_i;r_e].
\end{equation}
It can be shown that the smallest radius $r_e$ needed to indeed cover the entire rectangle and the related distance between consecutive centers $d_c=\|\boldsymbol{q}_{i+1}-\boldsymbol{q}_i\|$, $i=1,\dots, N_c-1$, aligned on one axis, is given by
\begin{equation}\label{eq_placing}
    r_e = \sqrt{\left(\frac{l}{2N_c }\right)^2 + \frac{w^2}{4}}, \; \; \; d_c = 2\sqrt{r_e^2 -\frac{w^2}{4}}.
\end{equation}

 To summarize, the problem setup is illustrated in Figure \ref{fig_problem_statement} for a two-circle approximation of the ego vehicle, and the problem is stated next.\\
\textit{Problem}: Given Assumption \ref{ass1} and the circles covering the ego vehicle (\ref{shape}) - (\ref{eq_placing}), derive an over-approximation of (\ref{general}).

\section{Circle-to-Circle Collisions}\label{sec_circle_to_circle}
To understand the derivation for multi-circle-to-circle collisions it is required to derive circle-to-circle collisions (i.e., there is one ego vehicle circle and one object circle) first as the extension to multi-circles follows from this case and is provided in Section \ref{sec_multi_circle}. By Assumption \ref{ass1}, the ego vehicle's information about the object is uncertain. Moreover, using the single circle shape approximation of the object, we can formulate the uncertainty in terms of a Gaussian distribution that depends only on the position $\boldsymbol{q}_o = (c_{1}, c_{2})$ in the ego vehicle frame and therefore restricts us to a bivariate Gaussian, i.e.,
\begin{equation}\label{gauss}
\begin{split}
    &p_{\boldsymbol{q}_o}(c_1, c_2) = \\
    &\frac{1}{2 \pi \sigma_{c_1}\sigma_{c_2}} \exp{\left(-\frac{1}{2} \left( \frac{(c_1 - \mu_{c_1})^2}{\sigma_{c_1}^2} +  \frac{(c_2 - \mu_{c_2})^2}{\sigma_{c_2}^2} \right) \right)}.
\end{split}
\end{equation}
Here, $\sigma_{c_1}, \sigma_{c_2}$ denote the respective standard deviations and $\mu_{c_1}, \mu_{c_2}$ the respective mean values of the object's two Cartesian positions $c_1$ and $c_2$ with respect to the ego vehicle frame. Equation (\ref{gauss}) represents a non-zero-mean, anisotropic bivariate normal distribution. For the circle-to-circle collision, we obtain the collision event as $\tilde{\mathcal{A}}^B := \{\boldsymbol{q} \in \mathbb{R}^2 \mid \mathcal{B}[\boldsymbol{q}_e, r_e] \cap \mathcal{B}[\boldsymbol{q}, r_o] \neq \emptyset\}$. In the following, we introduce four different approaches to compute the POC using (\ref{gauss}). Besides the baseline MCS approach, the derivations and differences among the other approaches are helpful results for the practitioner. 

\subsection{Baseline, Approach 1: Monte Carlo Sampling}\label{sec_MCS}
Our baseline approach is MCS, since it has found widespread adoption within the literature (see, e.g., \cite{lambert2008collision}). For the MCS solution, we introduce the collision indicator function 
\begin{equation}\label{eq_indicator}
    I_C(\boldsymbol{q}_{e}, \boldsymbol{q}_{o}) = 
  \begin{cases}
    1       \quad \text{if } d \leq r_e + r_o, \\
    0   \quad \text{otherwise,}
  \end{cases}
\end{equation}
where $d = \| \boldsymbol{q_e} -\boldsymbol{q_o} \|$ in (\ref{eq_indicator}) represents the Euclidean distance between the centers of both actor's circles. We approximate the POC by the law of large numbers, where $\boldsymbol{q}_{o,j}$ is one of $N_J$ samples drawn from the density $p_{\boldsymbol{q}_o}$ in (\ref{gauss}). Thus, we approximate the POC by
\begin{equation}\label{eq_MCS}
\begin{split}
\frac{1}{N_J}\sum_{j = 1}^{N_J} I_C(\boldsymbol{q}_{e}, \boldsymbol{q}_{o,j}) \approx \mathbb{P}\{\boldsymbol{q}_o \in \tilde{\mathcal{A}}^B\}.
\end{split}
\end{equation}

\subsection{Proposed Approach 2: Local Coordinates}\label{sec_local_coordinates}
\begin{figure}
\begin{center}
\includegraphics[width=8.4cm]{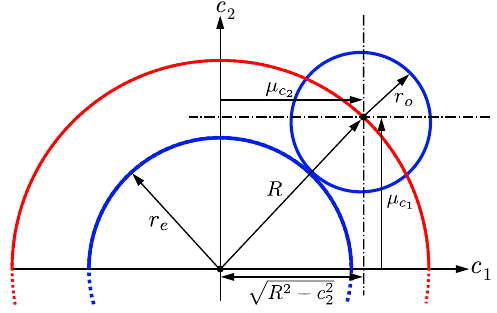}  
\caption{Determination of the integral bounds in local coordinates, where $c_2 \in [-R, R]$ and $c_1 \in \left[-\sqrt{R^2-c_2^2}, \sqrt{R^2-c_2^2}\right]$.}
\label{local_circles}
\end{center}
\end{figure}
We propose to solve the problem in the ego vehicle coordinate frame, such that (\ref{gauss}) directly applies; see Figure \ref{local_circles}. The red circle represents the collision radius $R = r_e + r_o$, visualizing that all object positions within the compact red circle lead to a collision. Thus, the red circle provides the integration region in (\ref{general}). We can explicitly express $\tilde{\mathcal{A}}^B := \{\boldsymbol{q} \in \mathbb{R}^2 \mid c_1^2 + c_2^2 \leq R^2 \}$. With the transformation and integral bounds, we can apply Equation (\ref{gauss}) to (\ref{general}), yielding:
\begin{equation}\label{own_1}
    \begin{split}
     &\mathbb{P}\{\boldsymbol{q}_o \in \tilde{\mathcal{A}}^B\} = \frac{1}{2 \pi \sigma_{c_1}\sigma_{c_2}} \int_{-R}^{R}\int_{-\sqrt{R^2 - c_2^2}}^{\sqrt{R^2 - c_2^2}} \\
     &\exp{\left(-\frac{1}{2} \left( \frac{(c_1 - \mu_{c_1})^2}{\sigma_{c_1}^2} +  \frac{(c_2 - \mu_{c_2})^2}{\sigma_{c_2}^2} \right) \right)} \text{d}c_1 \text{d}c_2.
     \end{split}
\end{equation}
By Assumption \ref{ass1} both random variables $c_1, c_2$ are independent and by considering $\exp{(x+y)} = \exp{(x)}\exp{(y)}$, we can evaluate the inner and outer integral separately, where each integral involves the one-dimensional normal distribution. Knowing the cumulative distribution function of such, we can solve the inner integral and obtain
\begin{equation}\label{own_2}
\begin{split}
    & \mathbb{P}\{\boldsymbol{q}_o \in \tilde{\mathcal{A}}^B \} = \frac{1}{2 \sqrt{2\pi} \sigma_{c_2}}\int_{-R}^R \exp{\left( -\frac{(c_2 - \mu_{c_2})^2}{2\sigma_{c_2}^2} \right)}\\
    &\left[\!\text{erf} \left( \frac{\sqrt{R^2\!-\!c_2^2}\!-\!\mu_{c_1}}{\sigma_{c_1} \sqrt{2}} \right)\!+\!\text{erf} \left( \frac{\sqrt{R^2\!-\!c_2^2}\!+\!\mu_{c_1}}{\sigma_{c_1} \sqrt{2}} \right)\!\right] \text{d}c_2,
\end{split}
\end{equation} where $\text{erf}(x)$ denotes the error function of $x$. Note that (\ref{own_2}) is only computable by numerical integration. 
\begin{remark}(Marginalization of the Radius)
The marginalization of (\ref{own_1}) over the radius can be found in \cite{AP91} p.190. Moreover, for zero-mean Gaussians and also isotropic uncertainty (i.e., $\sigma_{c_1} = \sigma_{c_2}$), further derivation steps are provided.
\end{remark}

\subsection{Approach 3: Global Coordinates}\label{sec_global}

The case of global coordinates, also known as the offset circle, follows directly from Section \ref{sec_local_coordinates}, and has been published by \cite{didonato1961integration}.
We find the derivation by offsetting the collision circle (red circle in Figure \ref{local_circles}) by the global ego vehicle position, i.e., $(\hat{c}_{1, e}, \hat{c}_{2, e})$. Naturally, that introduces the transformations $c_1 = \hat{c}_1 - \hat{c}_{1,e}, c_2 = \hat{c}_2 - \hat{c}_{2,e}$, where $(\hat{c}_1, \hat{c}_2)$ represent the global random object position, shifting the random vector $\boldsymbol{q}_o$. Hence, we obtain $\hat{\tilde{\mathcal{A}}}^B := \{ \hat{\boldsymbol{q}} \in \mathbb{R}^2 \mid (\hat{c}_1 - \hat{c}_{1,e})^2 + (\hat{c}_2 - \hat{c}_{2,e})^2 \leq R^2 \}$ as the collision set. 
Substituting the transformation into (\ref{own_1}) and applying the same derivation steps as in Section \ref{sec_local_coordinates}, recalling that $\mu_{c_1} = \hat{c}_{1,o} - \hat{c}_{1,e}, \mu_{c_2} = \hat{c}_{2,o} - \hat{c}_{2,e}$, gives
\begin{equation}\label{eq_global}
\begin{split}
        & \mathbb{P}\{\hat{\boldsymbol{q}}_o \in \hat{\tilde{\mathcal{A}}}^B \}=\\
        &\frac{1}{2 \sqrt{2\pi} \sigma_{c_2}}\int_{\hat{c}_{2,e}-R}^{\hat{c}_{2,e} + R} \exp{\left( -\frac{((\hat{c}_2 - \hat{c}_{2,e}) - \hat{c}_{2,o})^2}{2\sigma_{c_2}^2} \right)}\\
        &\Bigg[ \text{erf} \left( \frac{\hat{c}_{1,e} -\hat{c}_{1,o}+\sqrt{R^2 - (\hat{c}_2 - \hat{c}_{2,e})^2}}{\sigma_{c_1} \sqrt{2}} \right)  \\
        &+ \text{erf} \left( \frac{\hat{c}_{1,o} -\hat{c}_{1,e}+\sqrt{R^2 - (\hat{c}_2 - \hat{c}_{2,e})^2}}{\sigma_{c_1} \sqrt{2}} \right) \Bigg] \text{d}\hat{c}_2.
\end{split}
\end{equation}
\subsection{Approach 4: Polar Coordinates}\label{sec_polar}
Since the shape approximations are circular, we can transform the problem to polar coordinates. With the polar transformation, the integral bounds are not a function of each other (compare to (\ref{own_1})). Let us define $\rho$ and $\phi$ such that $c_1 = \rho \cos{(\phi)}$, $c_2 = \rho \sin{(\phi)}$, $\mu_{c_1} = d_{\mu} \cos{(\mu_\phi)}$, $\mu_{c_2} = d_{\mu} \sin{(\mu_\phi)}$, where $d_{\mu} = \sqrt{\mu_{c_1}^2 + \mu_{c_1}^2}$ denotes the mean distance between both circles and $\mu_\phi = \text{atan2}(\mu_{c_1}, \mu_{c_2})$ is the mean angle. With this transformation, we obtain
\begin{equation}\label{eq_polar}
\begin{split}
    &p_{\boldsymbol{q}_o}(\phi, \rho) = \frac{1}{2 \pi \sigma_{c_1}\sigma_{c_2}} \text{exp} \Bigg[ \frac{(\rho \cos{(\phi)} - d_{\mu} \cos{(\mu_\phi))^2}}{-2\sigma_{c_1}^2} \\
    & -\frac{(\rho \sin{(\phi)} -  d_{\mu} \sin{(\mu_\phi))^2}}{2\sigma_{c_2}^2}  \Bigg].
\end{split}
\end{equation}
In order to solve for the POC, we integrate as 
\begin{equation}\label{double}
\begin{split}
\mathbb{P}\{(\phi_o, \rho_o)\!\in\![0, 2\pi)\!\times\![0,\!R]\}\!=\!
\int_0^R\!\int_0^{2\pi}\! \rho \, p_{\boldsymbol{q}_o}\!(\phi,\!\rho)\text{d}\phi \text{d} \rho.
\end{split}
\end{equation}
\begin{remark}(Marginals of Polar Coordinates)
    The marginalization of (\ref{eq_polar}) can be found in \cite{cooper2020toolbox}, where many other variants of the bivariate Gaussian are also discussed.
\end{remark}

\subsection{Comparison of Approaches}\label{sec_comp}

To find the computationally most efficient approach, we implement all approaches in numerical software. The MCS approach (\ref{eq_MCS}) is implemented in an algorithm as presented by \cite{lambert2008collision}, Algorithm 1. Hereto, we use $N_J = 10^4$ samples, which yields a three digits precision for the POC. The approaches 1 - 4 require numerical integration which we perform with the global adaptive quadrature method, for which the tolerances are set to also yield three digits precision. To highlight the benefit of the proposed derivation for the local and global coordinate approach, we integrate the initial double integral and the single integral. For each method, including MCS, we repeated the calculation $10^4$ times and include the time for coordinate transformations. The average computational time needed for one computation is used as a comparative metric. We used a notebook with an Intel i7-9850H processor and \qty{32}{GB} of memory as a computational platform. The results are presented in Table \ref{table_computing}.
\begin{table}[hb]
\begin{center}
\caption{Results: Average Computing Times}\label{table_computing}
\begin{tabular}{lc}
Approach & Average Computing Time [ms]\\\hline
1) Monte Carlo sampling (\ref{eq_MCS})&  56.340 \\ 
2) Local coordinates (\ref{own_1}) & 0.275\\ 
2) Local coordinates (\ref{own_2}) & 0.171\\ 
3) Global coordinates & 0.372 \\ 
3) Global coordinates (\ref{eq_global})& 0.208 \\ 
4) Polar coordinates (\ref{eq_polar})& 0.395 \\ \hline
\end{tabular}
\end{center}
\end{table}
Evidently, the local coordinate approach poses the least computational effort, while the MCS approach takes the most time to compute. Overall, the MCS approach is two orders of magnitude slower than any other approach. We note that the difference between the local and the global approach is approx. \qty{0.1}{\ms}. A more significant reduction is found between the single and the double integral of the same approach. Here, the presented derivations yield a significant computational benefit. The polar approach takes the most time out of the numerical integration methods. We conjecture that this could be improved by reducing (\ref{eq_polar}) to a single integral, however, we did not find a solution for anisotropic uncertainty, i.e., $\sigma_{c_1} \neq \sigma_{c_2}$.

\section{Multi-Circle-to-Circle Collision} \label{sec_multi_circle}

Having determined the approach with the least computational effort, we proceed to apply that approach to the multi-circle-to-circle collision case, where the two-circle-to-circle case is depicted in Figure \ref{fig_problem_statement}. We use that case to derive the geometric conditions for ego vehicle approximations of multiple circles along multiple axis. The problem of multi-circle approximations is essentially different to that of Section \ref{sec_circle_to_circle}, since there are object positions for which the object's circle collides with \textit{multiple} ego vehicle circles.

\subsection{Single-Axis Circle Placement}\label{sec_single_axis}

Given the fact that the ego's circles are placed along one axis, the object circle can collide with two ego circles when $d_c \leq 2R$, which is clearly the case in Figure \ref{fig_problem_statement}, and is always guaranteed by (\ref{eq_placing}), since $w, r_o \in \mathbb{R}_{>0}$. The set of center positions for which the object's circle collides with two ego circles is lens-shaped (see the red lens in Figure \ref{fig_lens}). That set (i.e., the lens) is in fact accounted for twice by computing the POC of both ego's circles separately and adding the individual POCs, thereby violating Definition \ref{def011}. To retain a proper probability, we calculate the POC for both ego circles with the object circle and subtract the POC of the lens with the object circle. The properties of Definition \ref{def011} allows us to do so, which we show in the following.
\begin{lemma}\label{lem_01}
The POC of the two-circle-to-circle case with $d_c$ and $r_e$ identified by (\ref{eq_placing}) is a probability given by \begin{equation}\label{eq_two_cirles}
\begin{split}
    &\mathbb{P}\{ \boldsymbol{q}_o \in \tilde{\mathcal{A}}_{coll}\} = \\
    &\mathbb{P}\{ \boldsymbol{q}_o \in \tilde{\mathcal{A}}_{1}^{B}\}  +\mathbb{P}\{ \boldsymbol{q}_o \in \tilde{\mathcal{A}}_{2}^{B}\} - \mathbb{P}\{ \boldsymbol{q}_o \in \tilde{\mathcal{A}}_{1}^{L}\},
\end{split}
\end{equation}
where $\{ \boldsymbol{q}_o \in \tilde{\mathcal{A}}_{1}^{B} \}$ represents the event of the front ego circle colliding with the object's circle, $\{ \boldsymbol{q}_o \in \tilde{\mathcal{A}}_{2}^{B} \}$ represents the event of the ego's rear circle colliding with the object, and $\{ \boldsymbol{q}_o \in \tilde{\mathcal{A}}_{1}^{L} \}$ refers to the event of the ego's lens colliding with the object's circle.
\end{lemma}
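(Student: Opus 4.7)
The plan is to recognize that equation (\ref{eq_two_cirles}) is nothing other than the two-set inclusion--exclusion identity applied to the union of the two single-circle collision events, so the proof reduces to three ingredients: a set-theoretic identification of the three events, an appeal to Definition \ref{def011}, and a measurability check.

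First I would identify $\tilde{\mathcal{A}}_{coll} = \tilde{\mathcal{A}}_{1}^{B} \cup \tilde{\mathcal{A}}_{2}^{B}$. By (\ref{eq_shape}), the ego footprint is the union $\mathcal{B}[\boldsymbol{q}_1;r_e]\cup\mathcal{B}[\boldsymbol{q}_2;r_e]$, so a collision with $\mathcal{B}[\boldsymbol{q};r_o]$ occurs iff the object circle intersects at least one ego circle. Next I would show that the lens event coincides with the intersection, i.e., $\tilde{\mathcal{A}}_{1}^{L} = \tilde{\mathcal{A}}_{1}^{B} \cap \tilde{\mathcal{A}}_{2}^{B}$: an object-center position lies in the lens precisely when its circle touches both ego circles simultaneously, which is exactly the definition of belonging to both $\tilde{\mathcal{A}}_{1}^{B}$ and $\tilde{\mathcal{A}}_{2}^{B}$. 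The geometric condition (\ref{eq_placing}) gives $d_c = 2\sqrt{r_e^2 - w^2/4} < 2r_e \le 2R$, so the lens is a genuine (non-degenerate) region in $\mathbb{R}^2$ and the correction term is meaningful.

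With these identifications, the identity (\ref{eq_two_cirles}) follows from the standard inclusion--exclusion formula $\mathbb{P}(A\cup B) = \mathbb{P}(A)+\mathbb{P}(B)-\mathbb{P}(A\cap B)$, which in turn is a direct consequence of Definition \ref{def011}: write the disjoint decomposition $A\cup B = (A\setminus B) \sqcup (B \setminus A) \sqcup (A\cap B)$ and $A = (A\setminus B)\sqcup(A\cap B)$, $B = (B\setminus A)\sqcup(A\cap B)$, then apply axiom (ii) three times and rearrange. Applied to $A = \{\boldsymbol{q}_o \in \tilde{\mathcal{A}}_1^B\}$ and $B = \{\boldsymbol{q}_o \in \tilde{\mathcal{A}}_2^B\}$, this gives exactly (\ref{eq_two_cirles}).

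What remains is to confirm that (\ref{eq_two_cirles}) returns a bona fide probability. Each of $\tilde{\mathcal{A}}_1^B$, $\tilde{\mathcal{A}}_2^B$ and their intersection $\tilde{\mathcal{A}}_1^L$ is a closed (or intersection of closed) subset of $\mathbb{R}^2$, hence Borel; by Definition \ref{def_random_vector} their preimages under $\boldsymbol{q}_o$ lie in $\mathcal{F}$, so all three probabilities on the right-hand side are well defined. The left-hand side is then by construction $\mathbb{P}\{\boldsymbol{q}_o \in \tilde{\mathcal{A}}_1^B \cup \tilde{\mathcal{A}}_2^B\} \in [0,1]$, so Definition \ref{def011} is satisfied. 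The only subtle point---and the place where care is needed---is checking that the lens is really captured as a single intersection and not double-subtracted; with only two circles this is immediate, but it is the natural obstacle when generalizing to $N_c > 2$ circles in Section \ref{sec_multi_circle}, where higher-order inclusion--exclusion corrections would in principle appear.
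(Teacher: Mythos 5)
Your proposal is correct and follows essentially the same route as the paper: identify $\tilde{\mathcal{A}}_{coll}=\tilde{\mathcal{A}}_1^B\cup\tilde{\mathcal{A}}_2^B$ and $\tilde{\mathcal{A}}_1^L=\tilde{\mathcal{A}}_1^B\cap\tilde{\mathcal{A}}_2^B$, then derive two-set inclusion--exclusion from the additivity axiom of Definition \ref{def011} via disjoint decompositions, using Definition \ref{def_random_vector} to pass between events in $\mathcal{F}$ and sets of object positions. Your explicit measurability check and the non-degeneracy remark on the lens are welcome additions the paper leaves implicit, but they do not change the argument.
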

\begin{proof} By Definition \ref{def_random_vector} we have $\mathbb{P}\{ \boldsymbol{q}_o \! \in \!\tilde{\mathcal{A}}_n\} \! = \! \mathbb{P}\{ \boldsymbol{q}_o(\omega) \!\in\! \Omega \mid \boldsymbol{q}_o(\omega)\! \in\! \tilde{\mathcal{A}}_n\}$, where the set $\{ \boldsymbol{q}_o(\omega) \!\in \!\Omega \mid \boldsymbol{q}_o(\omega)\! \in \!\tilde{\mathcal{A}}_n\}\! =\! \mathcal{A}_n$.
Let $\mathcal{A}^L_1\!=\!\mathcal{A}_{1}^{B} \cap \mathcal{A}_{2}^{B}$, the sets $\mathcal{A}_{1}^{B}$ and $\mathcal{A}_{2}^{B} \backslash (\mathcal{A}_{1}^{B} \cap \mathcal{A}_{2}^{B}) $ are disjoint, so $\mathcal{A}_{coll}\!=\!\mathcal{A}_{1}^{B} \cup \mathcal{A}_{2}^{B} \backslash (\mathcal{A}_{1}^{B} \cap \mathcal{A}_{2}^{B})$. From Definition \ref{def011}(ii) it follows that
\begin{equation}\label{eq_p1}
    \mathbb{P}\{ \mathcal{A}_{1}^{B} \cup \mathcal{A}_{2}^{B}\} =
    \mathbb{P}\{\mathcal{A}_{1}^{B} \} + \mathbb{P}\{ \mathcal{A}_{2}^{B} \backslash (\mathcal{A}_{1}^{B} \cap \mathcal{A}_{2}^{B}) \} .
\end{equation}
Furthermore, the sets $\mathcal{A}_{1}^{B} \cap \mathcal{A}_{2}^{B}$ and $\mathcal{A}_{2}^{B} \backslash (\mathcal{A}_{1}^{B} \cap \mathcal{A}_{2}^{B})$ are disjoint, where $\mathcal{A}_{2}^{B} = (\mathcal{A}_{1}^{B} \cap \mathcal{A}_{2}^{B}) \cup \mathcal{A}_{2}^{B} \backslash (\mathcal{A}_{1}^{B} \cap \mathcal{A}_{2}^{B})$. Again, utilizing Definition \ref{def011}(ii) we find
\begin{equation}\label{eq_p2}
    \mathbb{P}\{ \mathcal{A}_{2}^{B}\} =
    \mathbb{P}\{\mathcal{A}_{1}^{B} \cap \mathcal{A}_{2}^{B} \} + \mathbb{P}\{ \mathcal{A}_{2}^{B} \backslash (\mathcal{A}_{1}^{B} \cap \mathcal{A}_{2}^{B}) \}.
\end{equation}
With $\mathbb{P}\{ \mathcal{A}_{1}^{L}\} = \mathbb{P}\{\mathcal{A}_{1}^{B} \cap \mathcal{A}_{2}^{B} \}$ and substituting (\ref{eq_p2}) into (\ref{eq_p1}) we obtain 
\begin{equation}\label{eq_proof}
        \mathbb{P}\{ \mathcal{A}_{1}^{B} \cup \mathcal{A}_{2}^{B}\} =
         \mathbb{P}\{ \mathcal{A}_{1}^{B}\} + \mathbb{P}\{ \mathcal{A}_{2}^{B}\} - \mathbb{P}\{ \mathcal{A}_{1}^{L}\}.
\end{equation}
Therefore, by Definition \ref{def_random_vector} and $\mathcal{A}_{1}^{B} \cup \mathcal{A}_{2}^{B} = \mathcal{A}_{coll}$, (\ref{eq_two_cirles}) is retrieved.
\end{proof}
Suppose the ego vehicle is approximated with $N_c$ circles placed along ${c}_1$, sized and placed according to (\ref{eq_placing}), than the POC is given by 
\begin{equation}\label{eq_single_axis}
    \mathbb{P}\{ \mathcal{A}_{coll}\} = \sum^{N_c}_{i = 1}\mathbb{P}\{ \mathcal{A}_i^B\} - \sum^{N_l}_{j = 1}\mathbb{P}\{ \mathcal{A}_j^L\},
\end{equation}
 where $N_l = N_c - 1$, i.e., the number of lenses. Further  $\mathcal{A}_i^B$ denotes collisions with the individual circles $\mathcal{A}_j^L$ and denotes collisions with the individual lenses. The proof of the expression above follows the same argument as the proof of Lemma \ref{lem_01}. 
\begin{figure}[t]
\begin{center}
\includegraphics[width=8.4cm]{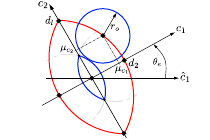}  
\caption{Geometry of the lens, the red line denotes all object circle positions leading to collisions with both ego circles.}
\label{fig_lens}
\end{center}
\end{figure}
\subsection{Multi-Axis Placement}\label{sec_multi_axis_placement}
\begin{figure}[t]
\begin{center}
\includegraphics[width=8.4cm]{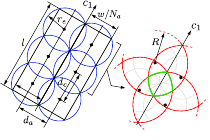}  
\caption{Exemplary coverage (left) and quadruple collision lens overlap, green area in right depiction. For better visibility, the blue circles are shaded in gray in the right depiction.}
\label{fig_multi-circle}
\end{center}
\end{figure}
For this case we require that $N_a$ axes are parallel to the $c_1$ axis and divide the rectangle in smaller sub-rectangles of length $l$ and width $w/N_a$, placed at a distance $d_a = w/N_a$, that ensures $d_a \leq 2R$ (see (\ref{eq_placing}) and Figure \ref{fig_multi-circle} for an exemplary case of two axes and six circles). For the multi-axis case, $N_c$ must satisfy $\mod(N_c, N_a) = 0$. While the single-axis case yielded upright lenses along the axis, representing object positions where it is to collide with two circles, the multi-axis case introduces horizontal lenses in-between the axes which must be accounted for. However, one cannot apply (\ref{eq_single_axis}), because the overlap of four collision lenses created by four adjacent circles would cancel out all points within that overlap area (see green area in Figure \ref{fig_multi-circle}). Therefore one must add the POC of the overlap area, which is shaped by four arcs of radius $R$. We find that the POC for the multi-axis case is given by
\begin{equation}\label{eq_multi_axis}
        \mathbb{P}\{ \mathcal{A}_{coll}\} = \sum^{N_c}_{i = 1}\mathbb{P}\{ \mathcal{A}_i^B\} - \sum^{N_l}_{j = 1}\mathbb{P}\{ \mathcal{A}_j^L\} + \sum^{N_o}_{m = 1} \mathbb{P}\{ \mathcal{A}_m^O \},
\end{equation}
with the total number of lenses $N_l$ and overlapping lenses $N_o$ as
\begin{equation}
    N_l = N_c \left( 2 - \frac{1}{N_a}\right) - N_a, \;\; N_o = \left(\frac{N_c}{N_a} - 1 \right) (N_a -1),
\end{equation}
and $\mathcal{A}_{m}^O$ denoting collisions with the individual lens overlap areas. Similar to (\ref{eq_single_axis}), the proof follows the same argument of the proof of Lemma \ref{lem_01}. Therefore, the POC is obtained with (\ref{eq_multi_axis}) for any number of axis and $N_c$ number of circles, if the geometric conditions from (\ref{eq_placing}) and Section \ref{sec_multi_axis_placement} are satisfied. 
\subsection{POC of the Lens}
Similar to the circle-to-circle case, we have to define the collision line (see the red lens in Figure \ref{fig_lens}), i.e., all object positions for which a collision with both ego circles occurs. This yields the coordinate $(d_2,0)$, the upper integral limit of the collision lens on the $c_1$ axis, where $d_2$ is given by
\begin{equation}\label{eq_d_2}
    d_2 = \sqrt{(r_e + r_o)^2-\frac{d_c^2}{4}}.
\end{equation}
We derive the upper integral limit on the $c_2$ axis as a function of the outer integral, that is,
\begin{equation}\label{eq_dl}
    d_l(c_2) = \sqrt{(r_e + r_o)^2 - c_2^2} -\frac{d_c}{2}.
\end{equation} 
Hence, we find the integral bounds as $c_{1} \in [-d_2,d_2]$ and $c_{2} \in [-d_l(c_{1}),d_l(c_{1})]$. 
We use (\ref{eq_d_2}), (\ref{eq_dl}) and apply the same steps as in (\ref{own_1}) - (\ref{own_2}) to obtain for the POC of the lens with the object circle:
\begin{equation}\label{eq_lens}
    \begin{split}
    &\mathbb{P}\{ \boldsymbol{q}_o \in \tilde{\mathcal{A}}^{L}\} = 
     \frac{1}{2 \sqrt{2\pi} {\sigma}_{c_2}}\int_{-d_2}^{d_2} \exp{\left( -\frac{({c}_1 - {\mu}_{c_2})^2}{2{\sigma}_{c_2}^2} \right)}\\
    &\Bigg[ \text{erf} \left( \frac{\sqrt{(r_e + r_o)^2 - {c}_1^2} -\frac{d_c}{2}  - {\mu}_{c_1}}{{\sigma}_{c_1} \sqrt{2}} \right) + \\
    &\text{erf} \left( \frac{\sqrt{(r_e + r_o)^2 - {c}_1^2} -\frac{d_c}{2}  + {\mu}_{c_1}}{{\sigma}_{c_1} \sqrt{2}} \right)\Bigg] \text{d}{c}_1.
\end{split}
\end{equation}
With (\ref{eq_lens}) and (\ref{own_2}) we can compute the POC for the two-circle-to-circle collision by applying (\ref{eq_two_cirles}) and placing the circles as described with (\ref{eq_placing}) along one axis.

\section{Error Bounding}\label{sec_error}

With the approach in the previous sections, we always over-approximate the ego vehicle's shape with (\ref{eq_placing}). That is, the ego vehicle is always entirely covered by the circles. Consequently, depending on the length and width of the rectangle and the number of circles and their placement, we are also always over-approximating the POC by some amount. Given the presented case of two-circle-to-one-circle collisions, (\ref{eq_two_cirles}) gives an upper-bound of the POC, denoted $\mathbb{P}\{ \mathcal{A}_{coll}^{up}\}$. To derive a lower-bound, we parameterize circles not according to (\ref{eq_placing}). Instead, we place them such that they are always fully encapsulated by the rectangle. The parameterization for two circles is straightforward and given by
\begin{equation}\label{eq_underapp}
    r_e = \frac{w}{2}, \; \; \; d_c = l - w.
\end{equation}
With (\ref{eq_underapp}), we can apply (\ref{eq_lens}) and (\ref{own_2}) and (\ref{eq_two_cirles}), from which we obtain the lower bound of the POC $\mathbb{P}\{ \mathcal{A}_{coll}^{low}\}$.
We measure the bounding corridor with the difference:
\begin{equation}
    \Delta_a = \mathbb{P}\{ \mathcal{A}_{coll}^{up}\} - \mathbb{P}\{ \mathcal{A}_{coll}^{low}\}.
\end{equation}
We demonstrate the resulting approach in the following section.

\section{Simulation Example}\label{sec_example}
\begin{figure}[t!]
\begin{center}
\includegraphics[width=8.4cm]{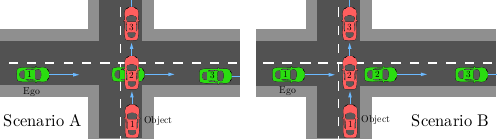}  
\caption{Left: Scenario A, right: Scenario B. The green vehicle represents the ego vehicle and the red vehicle the object. The actors are displayed at three different time steps: 1: $t=$ \qty{0}{s}, 2: $t=$ \qty{4}{s}, 3: $t=$ \qty{8}{s}.}
\label{fig_simulation_scenario}
\end{center}
\end{figure}

To demonstrate the proposed approach, we choose a representative intersection scenario as a simulation example. Herein, we let the ego vehicle and object travel at constant velocity and along a straight trajectory. Both trajectories are perpendicular to each other; hence they are intersecting. We model two scenarios, A and B, respectively, both scenarios are depicted in Figure \ref{fig_simulation_scenario}. The ego vehicle is modeled by two circles placed in accordance with (\ref{eq_placing}) and the object by a single circle. Scenario A: Both actors collide; since we do not model the collision itself, they drive through each other. This means that the POC must be approaching one during the collision period if the standard deviations are sufficiently small. Scenario B: The object passes the intersection before the ego vehicle. Hence, the vehicles do not collide. In addition to the upper and lower bound (see Section \ref{sec_error}), we approximate the actual POC by MCS of the ego rectangle colliding with the circular object. We model the uncertainty depending on the distance between both actors to simulate that the measurement and estimation of closer objects are generally more accurate. We choose a logistic function to model the relative distance-dependent standard deviations as:
\begin{equation*}
    \Sigma(d) = \frac{1}{1+ \exp{[-\lambda (d - d_0)]}}\Sigma_{max},
\end{equation*}
where $\lambda, d_0$ being free parameters, $\Sigma_{max} \in \mathbb{R}^{2\times2}$ is diagonal with positive entries representing the maximum standard deviations and $d=\|\boldsymbol{q}_{e}-\boldsymbol{q}_o\|$. The parameters are provided in the Appendix. The results are displayed in Figure \ref{fig_results}. In scenario A, one can see that the POC is approaching one for the duration of the collision. With $k \in \mathbb{N}$ denoting discrete-time, we find that the bounding corridor $\Delta_{a,k} \leq 0.08$ for all $k$. Regarding scenario B, the maximum POC is less than $0.40$, when the distance between the ego vehicle and object is \qty{2.77}{m}. The maximum bounding difference is lower than in scenario A and for that we find $\Delta_{a,k} \leq 0.07$ for all $k$. As expected, the POC of the ego rectangle colliding with the object circle is always within the corridor of the upper and lower bound. The fluctuations of the MCS result are explained by the sampling method and vanish when the number of samples increases. From Figure \ref{fig_results} we conclude that the proposed approach allows for accurate estimation of the real POC. We note, that the error bounds can further be tightened by taking more circles in the approximation.
\begin{figure}[t!]
\begin{center}
\includegraphics[width=8.4cm]{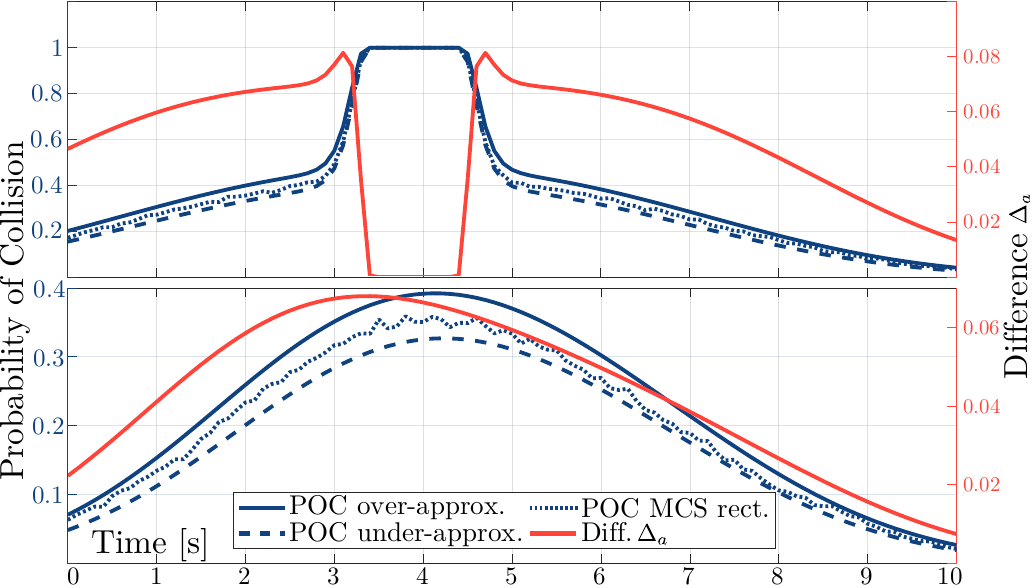}  
\caption{Results of the simulation example. The upper plot represents scenario A, the lower plot represents scenario B.}
\label{fig_results}
\end{center}
\end{figure}
\section{Discussion of the Results} \label{sec_discussion}
The article is geared to address practitioners, to whom two questions may come into mind: can the POC be computed fast enough for my application? And, is the approximation good enough for my application? We note that the computational efficiency can be further improved by a) parallelization, since the POCs for each circle and lens can be computed independent of the other circles and lenses, and b) a more efficient low-level implementation. On this account, we believe that the approximation error and computational effort are sufficiently small for most motion planning applications. Regarding the error, one can see in Figure \ref{fig_results} that the error reduces (i.e., the bounding corridor narrows) when the POC reduces. In practice, only small POCs within a motion planner would be allowed. For these small POCs, the error will also be sufficiently small to a point where we argue that the error is negligible. 

\section{Conclusions and Future Work}\label{sec_conclusion}
In this paper, we provide a method for over-approximating the probability of collision (POC) for circular shape approximations with Gaussian uncertainty, as the state-of-the-art method of Monte Carlo sampling is computationally expensive and does not guarantee an under or over-approximation for finite amounts of samples. Thereby, we address an open gap in the literature, as outlined in Section \ref{sec_introduction}. Given a Gaussian distribution, we derived various methods (see Section \ref{sec_circle_to_circle}) and compared these in Section \ref{sec_comp} based on computational effort. We find that the proposed local coordinate approach is computationally most efficient. In Section \ref{sec_multi_circle}, we extend the approach to multi-circle shape approximations with a multi-axis placement and guarantee to retain a probability when certain geometric conditions are satisfied. Having the multi-circle-to-circle derivation, which over-approximates the POC, we present an intuitive method to under-approximate the error in Section \ref{sec_error}. Finally, we demonstrate the resulting algorithm in a representative example (Section \ref{sec_example}). We find that the error remains sufficiently small for the tested scenarios, and the computational effort is low, allowing for scalability as needed by motion planning algorithms. However, it remains an open challenge to derive the POC for the multi-circle-to-multi-circle collision. Such appears challenging as the objects' orientations are generally uncertain if the position is assumed to be uncertain. If the object's heading angle becomes a random variable, one obtains a trivariate Gaussian, for which we expect the computational effort to increase. Additionally, the construction of the collision set, i.e., all object positions leading to a collision, becomes non-trivial since the collision depends on the heading. On that account, solving the rotating lens to rotating lens case presents the key. Future work will investigate orientation-depended multi-circular object approximations as well as the implementation of the presented work within a stochastic model predictive control-based motion planner. Here, the trade-off between estimation accuracy and computational effort by adding varying numbers of circles to each actor is particularly interesting. The integration inside a motion planner allows for testing in more realistic scenarios. 

\bibliography{lib}
\bibliographystyle{ieeetr}
\appendix \label{appendix} Simulation parameters: \textit{uncertainty}: $\lambda = 6, d_0 = 1, \Sigma_{max} =\begin{pmatrix} 2 & 0\\0 & 5\end{pmatrix}$. \textit{Shapes}: Ego dimensions $w = 2, l = 4.5$, number of ego circles $n_c = 2$, radius object $r_o=2$. Scenario A: \textit{Ego vehicle}: initial configuration $\boldsymbol{y}_{e,0} = (0, 4, 0)$, velocity $v_e = 1$ turn rate $\omega_e = 0$. \textit{Object}: initial configuration $\boldsymbol{y}_{o,0} = (4, 0, \pi/2)$, velocity $v_o = 1$ turn rate $\omega_o = 0$. Scenario B: \textit{Ego vehicle}: initial configuration $\boldsymbol{y}_{e,0} = (0, 4, 0)$, velocity $v_e = 1$ turn rate $\omega_e = 0$. \textit{Object}: initial configuration $\boldsymbol{y}_{o,0} = (6, 0, \pi/2)$, velocity $v_o = 1.5$ turn rate $\omega_o = 0$. 
\end{document}